\pgfplotsset{compat=1.18}       
\newtheorem{proposition}{Proposition}
\title{The Unwinnable Arms Race of AI Image Detection}
\author{%
  Till Aczel \quad Lorenzo Vettor \quad Andreas Plesner \quad Roger Wattenhofer \\
  ETH Zürich, Switzerland \\
  \texttt{\{taczel, lvettor, aplesner, wattenhofer\}@ethz.ch}
}
\begin{document}

\maketitle

\begin{abstract}
The rapid progress of image generative AI has blurred the boundary between synthetic and real images, fueling an arms race between generators and discriminators. This paper investigates the conditions under which discriminators are most disadvantaged in this competition. We analyze two key factors: data dimensionality and data complexity. While increased dimensionality often strengthens the discriminator’s ability to detect subtle inconsistencies, complexity introduces a more nuanced effect. Using Kolmogorov complexity as a measure of intrinsic dataset structure, we show that both very simple and highly complex datasets reduce the detectability of synthetic images; generators can learn simple datasets almost perfectly, whereas extreme diversity masks imperfections. In contrast, intermediate-complexity datasets create the most favorable conditions for detection, as generators fail to fully capture the distribution and their errors remain visible.
\end{abstract}

\section{Introduction}

Over the past decade, generative AI has enabled highly realistic synthetic media, including deepfakes \citep{westerlund2019emergence}. 
These technologies blur the line between reality and fabrication, creating significant societal challenges \citep{de2021distinct}. 
While these advances have opened new possibilities in art and design \citep{zhou2024art}, they have also introduced risks in disinformation, fraud, and media authenticity verification \citep{9049288,verma2025deepfakes}. 
Reports show thousands of deepfake attacks annually, causing hundreds of millions in financial losses and eroding public trust in digital media \citep{venturebeatdeepfake2025, drjdeepfake2025, surfsharkdeepfake2025, wsjdeepfake2025, sippy2024behind}. 
Despite growing awareness, unaided human observers perform only slightly better than chance at distinguishing AI-generated images from real photographs \citep{roca2025good, groh2021}.
Traditional verification systems struggle to detect AI-generated content highlighting the urgent need for robust detection methods \citep{mahara2025methods}.
The ability to distinguish synthetic images from real ones has therefore become increasingly important, both for security and for maintaining trust in digital media.  

This dynamic has evolved into an arms race between generators, which strive to produce indistinguishable samples, and discriminators, which attempt to detect fakes.  
Over time, both AI-generated content and detection methods will improve, but the battle remains inherently asymmetric: if a generator perfectly captures the data distribution, no discriminator can ever win. 
Thus, the generator can always improve and approach a point where detection becomes impossible.
Understanding the limits of detection is crucial for developing reliable tools to safeguard digital content.  

Existing benchmarks focus on selecting the best discriminator.  
Little is known about the conditions under which discriminators are most disadvantaged, particularly when considering the full spectrum from simple, structured datasets to highly diverse, complex ones.  
For example, simple datasets include MNIST, which consists of centered grayscale digits with minimal variation, whereas complex datasets include CIFAR-10, which contains small color images across ten classes with significant variability in objects, backgrounds, and lighting.
We quantify complexity in terms of Kolmogorov complexity, which measures the inherent compressibility or structure of a dataset.  
This metric is particularly relevant in the context of generative modeling and detection, as datasets that are highly compressible are easier for generators to reproduce and harder for discriminators to exploit, whereas less compressible datasets introduce variability that challenges both sides.
As dataset complexity increases, the task becomes more challenging for both the generator and the discriminator.  
When the task is very simple, the generator can achieve near-perfect modeling, leaving the discriminator at a disadvantage.  
Conversely, if the dataset is extremely complex, neither the generator nor the discriminator can fully capture the distribution, and the discriminator again struggles to reliably detect fakes.  
In this sense, intermediate complexity presents a unique regime where the generator is imperfect but the data is structured enough for the discriminator to identify inconsistencies.

By examining this spectrum from simple to complex datasets and low to high dimensionality, we aim to map the conditions under which synthetic data are easiest and hardest to detect.  
Our contributions are as follows:
\begin{enumerate}
    \item \textbf{Formal proof of the inherent challenge of detection:} We show that distinguishing generated image content from real images is an unwinnable battle, establishing theoretical limits for discriminators.
    \item \textbf{Impact of dataset complexity:} We systematically analyze how the complexity of datasets, measured through Kolmogorov complexity, affects the detectability of synthetic images.
    \item \textbf{Role of input resolution:} We quantify how changes in image resolution influences the ability of discriminators to detect synthetic images.
\end{enumerate}

Our work serves both as a theoretical study, establishing formal limits on detectability, and as a conceptual framework, mapping how dataset complexity and resolution shape discriminator performance.
Together, these perspectives reveal both the long term impossibility of perfect detection and the practical regimes where it remains feasible.

\begin{figure}[t]
    \centering
    \includegraphics[width=\textwidth]{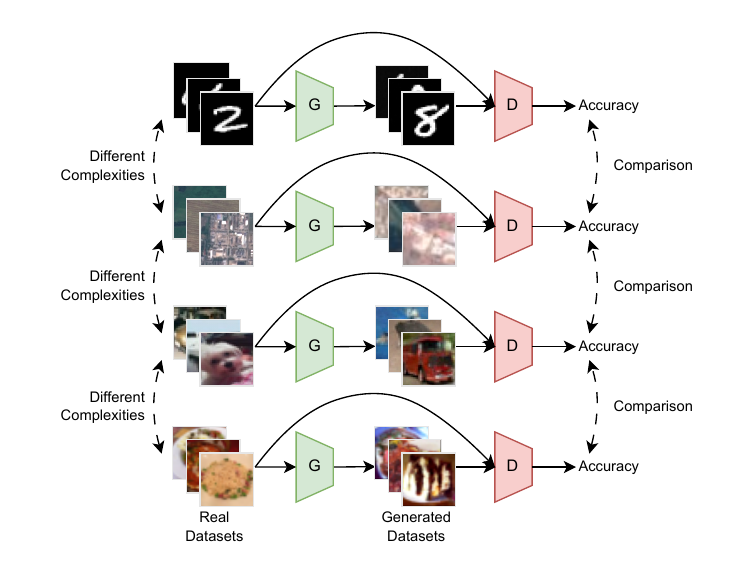}
    \caption{overview of the experimentation setup used in this project. We take multiple datasets with different (Kolmogorov) complexities or resolutions, and independently train copies of the same diffusion-based image generator and convolutional discriminator for the datasets. We then compare the accuracies of the discriminators against differences in the dataset complexities.}
    \label{fig: overview}
\end{figure}

\section{Related Works}

Over the past decade, image generation has advanced rapidly, evolving from Variational Autoencoders (VAEs) \citep{kingma2013auto} to Generative Adversarial Networks (GANs) \citep{goodfellow2014generative}, and more recently to diffusion models \citep{ho2020denoising}.
These generative models progressively improve the realism of synthetic images, effectively blurring the boundary between real and artificial content.
This progress creates new challenges for detection, as even humans struggle to distinguish AI-generated images from authentic ones, achieving only around 62\% accuracy \citep{roca2025good, lu2023seeing}.
Such limitations motivate the development of automated methods capable of reliably identifying synthetic media.

Early detection approaches focused on identifying artifacts inherent to generative models.
These artifacts include inconsistencies in pixel patterns, unnatural textures, irregular noise distributions, and subtle distortions in geometry or lighting \citep{frank2020leveraging, wang2020cnn}.
As generative models have become more sophisticated, deep learning classifiers have been increasingly applied to detect AI-generated images \citep{nataraj2019detecting, bird2024cifake}.
Hybrid forensic systems, combining deep learning with traditional forensic techniques, have further improved detection effectiveness \citep{yan2025sanitycheckaigeneratedimage, radford2021learning} 

Several novel methods emerge to address specific challenges in detection.
DIRE utilizes reconstruction errors derived from diffusion model inversion to detect AI-generated images \citep{wang2023dire}.
Similarly, SSP shows that even a single, carefully selected image patch can suffice for accurate detection, highlighting the presence of localized artifacts \citep{chen2024single}, and GANs have been found to generally have identifying artifacts/fingerprints \citep{yu2019iccv, durall2020cvpr, corvi2023}.
Recent research also explores leveraging multimodal large language models, which provide visually grounded explanations for detection decisions and enhance interpretability \citep{ojha2023towards}.
These advances collectively illustrate the rapid evolution of detection strategies, reflecting the ongoing arms race between generative models and discriminators. 

Benchmarking plays a crucial role in evaluating detector performance.
Large-scale datasets such as GenImage \citep{zhu2023genimagemillionscalebenchmarkdetecting} and Chameleon \citep{yan2025sanitycheckaigeneratedimage} provide diverse evaluation scenarios across a wide range of generative models, including Stable Diffusion \citep{rombach2022high}, Midjourney \citep{midjourney}, and BigGAN \citep{brock2018large}.
These benchmarks assess not only detection accuracy but also robustness under real-world conditions, such as low-resolution images, compression artifacts, and blurring.
Analyses from GenImage indicate that higher-resolution images reveal finer, more detectable artifacts, improving detection performance, whereas low-resolution or compressed images present greater challenges \citep{zhu2023genimagemillionscalebenchmarkdetecting}.

Some studies focus on leveraging semantic information, such as the number of fingers on a hand \citep{cheng2025co, zhang2023detecting}. 
At the same time, other work investigated how detectors designed and trained to detect GAN-generated images fail to generalize to diffusion-generated images and how detectors fail when images have been compressed \citep{corvi2023,gragnaniello2021}.
However, no work has investigated how dataset distribution and the resulting complexity influence detector performance.  

Kolmogorov complexity provides a framework to quantify intrinsic dataset complexity by measuring the length of the shortest program capable of reproducing it \citep{li2008introduction}.  
For image datasets, low Kolmogorov complexity corresponds to highly structured or repetitive content, which generators can learn easily, producing nearly indistinguishable synthetic images \citep{zenil2015two}.  

While previous studies have focused on developing AI-generated image detectors and evaluating them on large-scale benchmarks, none have explicitly analyzed the dynamics of the ongoing arms race between generators and discriminators over time.  
In this work, we are the first to systematically investigate how this long-term battle unfolds, identifying conditions under which detectors hold the greatest advantage.

\section{Methodology}
\label{sec:methodology}

We begin by formalizing the asymmetry of the detection problem.  
Let $p(x)$ denote the true data distribution and $q(x)$ the distribution induced by a generator.  
As long as $p \neq q$, there exists a discriminator $D$ with non-trivial accuracy in distinguishing real from synthetic samples.  
In the limiting case where $q(x) = p(x)$, however, the detection task becomes ill-posed: no discriminator can do better than always guessing the more probable class.   

\begin{proposition}
Let a dataset consist of real and generated samples with priors $\pi_r$ and $\pi_f = 1-\pi_r$.  
If the generator distribution $q(x)$ equals the data distribution $p(x)$ for all $x$, then every discriminator has accuracy equal to $\max\{\pi_r,\pi_f\}$.
\end{proposition}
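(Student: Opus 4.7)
The plan is to express the accuracy of an arbitrary discriminator in its class-conditional form, then exploit $p\equiv q$ to fold both conditionals into a single measure and reduce the problem to a one-parameter affine optimization whose only non-dominated values are $\pi_r$ and $\pi_f$.

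First, I would parametrize any discriminator by its ``real region'' $A=\{x:D(x)=\mathrm{real}\}\subseteq\mathcal{X}$, so that
\[
\mathrm{Acc}(D)=\pi_r\!\int_{A}p(x)\,dx+\pi_f\!\int_{A^{c}}q(x)\,dx.
\]
Substituting $p\equiv q$ and setting $\alpha=\int_{A}p(x)\,dx=\int_{A}q(x)\,dx\in[0,1]$, the accuracy collapses to the affine expression $\pi_r\alpha+\pi_f(1-\alpha)=\pi_f+(\pi_r-\pi_f)\alpha$, whose values lie in $[\min\{\pi_r,\pi_f\},\max\{\pi_r,\pi_f\}]$. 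Equivalently: when $p=q$, the posterior $P(Y=\mathrm{real}\mid X=x)=\pi_r$ is constant in $x$, the class-conditional law of any score $s(X)$ is identical under both labels, and the ROC curve of any discriminator coincides with the diagonal.

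I would then conclude by observing that the affine function of $\alpha\in[0,1]$ is maximized only at the endpoints $\alpha\in\{0,1\}$, which correspond to the trivial ``always-fake'' and ``always-real'' rules and both attain $\max\{\pi_r,\pi_f\}$ once orientation is optimized. The main hurdle is interpretive rather than technical: read as arbitrary measurable maps $D:\mathcal{X}\to\{\mathrm{real},\mathrm{fake}\}$ the equality would be falsified by, e.g., a constant minority-class predictor, so the statement needs the standard convention that a discriminator may at least choose the orientation of its decision rule (equivalently, that one reports the larger of the accuracy of $D$ and that of its negation, or that the decision region is the super-level set of a score thresholded to maximize accuracy). Under that convention, the two-line computation above shows that every discriminator's achievable accuracy collapses to the single value $\max\{\pi_r,\pi_f\}$, independently of the score or region it computes over $\mathcal{X}$.
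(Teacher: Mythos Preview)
Your argument is correct and in fact slightly more careful than the paper's. The paper proceeds via Bayes decision theory: it writes down the mixture density, computes the posterior class probabilities, and evaluates the Bayes-optimal accuracy $\int_{\mathcal X}\max\{\pi_r p(x),\pi_f q(x)\}\,dx$, which under $p=q$ collapses to $\max\{\pi_r,\pi_f\}$; it then concludes that no discriminator can outperform this baseline. You instead parametrize an arbitrary discriminator by its decision region $A$, compute its accuracy directly, and reduce it to an affine function of the single scalar $\alpha=\int_A p$, so that the entire achievable range is $[\min\{\pi_r,\pi_f\},\max\{\pi_r,\pi_f\}]$. Both routes establish the same upper bound; yours is more elementary (no appeal to Bayes optimality) and has the virtue of making explicit that the literal claim ``every discriminator has accuracy \emph{equal to} $\max\{\pi_r,\pi_f\}$'' requires the orientation convention you flag, since a constant minority-class predictor would otherwise falsify it. The paper's proof quietly sidesteps this by only establishing the Bayes-optimal rate and phrasing the conclusion as ``no discriminator can outperform this baseline,'' which is the upper-bound reading you also arrive at.
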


\begin{proof}
Let $p(x)$ and $q(x)$ denote the densities of real and generated samples w.r.t.\ a common dominating measure.  
The mixture (marginal) density is
\begin{equation}
m(x) = \pi_r p(x) + \pi_f q(x).
\end{equation}
Conditioned on $X=x$, the posterior probabilities of the two classes are
\begin{equation}
\Pr(\text{real}\mid x) = \frac{\pi_r p(x)}{m(x)},\qquad
\Pr(\text{fake}\mid x) = \frac{\pi_f q(x)}{m(x)}.
\end{equation}
The Bayes-optimal classifier chooses the class with larger posterior probability. Hence, the pointwise probability of a correct decision given $x$ is
\begin{equation}
\max\left\{ \frac{\pi_r p(x)}{m(x)}, \frac{\pi_f q(x)}{m(x)} \right\}.
\end{equation}
The overall Bayes-optimal accuracy is
\begin{equation}
\mathrm{Acc}^\star = \int_{\mathcal X} \max\left\{\frac{\pi_r p(x)}{m(x)}, \frac{\pi_f q(x)}{m(x)}\right\} m(x)\, dx
= \int_{\mathcal X} \max\{\pi_r p(x), \pi_f q(x)\}\, dx.
\end{equation}
If $q(x)=p(x)$ everywhere, then $\max\{\pi_r p(x), \pi_f q(x)\} = \max\{\pi_r,\pi_f\} \, p(x)$, so
\begin{equation}
\mathrm{Acc}^\star = \max\{\pi_r,\pi_f\} \int_{\mathcal X} p(x)\, dx = \max\{\pi_r,\pi_f\}.
\end{equation}
Thus, when $q=p$, the Bayes-optimal accuracy equals the prior of the more probable class, and no discriminator can outperform this baseline.
\end{proof}
This establishes the theoretical limit of detection and motivates our investigation of the practical regimes where discriminators retain predictive power. 
In particular, we study how two dataset-dependent factors shape discriminator performance: (i) dataset complexity and (ii) input dimensionality.  
We use Kolmogorov complexity $K(\mathcal{D})$ as a conceptual measure of dataset complexity, and approximate it empirically using lossless compression.  
Dimensionality, in contrast, refers to the raw input dimension $d$ (e.g.\ the number of pixels per image).  
While larger $d$ expands the feature space in which distributions $p$ and $q$ can be separated, it also increases the sample complexity required for reliable discrimination.  
Our experiments thus explore the trade-off between these two factors across a wide range of datasets.   

\subsection{Diffusion Model and Discriminator Training}

For each dataset, a diffusion model is trained to generate synthetic samples.  
Synthetic subsets are denoted as $D_{\text{train}}^{\text{fake}}$, $D_{\text{val}}^{\text{fake}}$, and $D_{\text{test}}^{\text{fake}}$, with sizes matched to the corresponding real subsets.  
The discriminator is trained on $D_{\text{train}}^{\text{real}} \cup D_{\text{train}}^{\text{fake}}$ and validated on $D_{\text{val}}^{\text{real}} \cup D_{\text{val}}^{\text{fake}}$, while evaluation is performed on $D_{\text{test}}^{\text{real}} \cup D_{\text{test}}^{\text{fake}}$.  
The setup is shown in \Cref{fig: overview} and enables controlled comparisons of detection performance across datasets that differ in both complexity and resolution.  

We evaluate six discriminator configurations that differ in architecture and input representation.  
The \texttt{Base} discriminator is a compact convolutional neural network with approximately $40{,}000$ parameters.  
The \texttt{Big} discriminator is a deeper variant with about $520{,}000$ parameters, providing increased capacity while maintaining architectural similarity to the base model.  

Both the \texttt{Base} and \texttt{Big} discriminators are trained under two input modalities.  
In the \textit{Pixel} setting, and in the \textit{Fourier} setting. In the latter, a two-dimensional Fast Fourier Transform (FFT) is applied, and the log-magnitude spectrum of each channel is used as input.  
This results in four models: \texttt{Pixel-Base}, \texttt{Pixel-Big}, \texttt{Fourier-Base}, and \texttt{Fourier-Big}.  

In addition, we evaluate a pretrained discriminator based on ResNet-18 \citep{he2016deep} pretrained on ImageNet, comprising approximately $11$M parameters.  
Two training regimes are considered.  
In the \texttt{Linear-ResNet} setting, only the final classification layer is optimized while the ResNet backbone remains frozen.  
In the \texttt{Finetuned-Resnet} setting, the entire network is updated end-to-end.  

\subsection{Approximation of Dataset Complexity}

Kolmogorov complexity is not computable due to the undecidability of the halting problem\citep{chaitin1995program}, but compression-based methods provide a tractable and meaningful approximation \citep{li2008introduction}.  
Modern lossless compressors exploit redundancies and regularities in the data, yielding an effective upper bound on true Kolmogorov complexity \citep{grunwald2004shannon}.  
Datasets that compress strongly exhibit high internal structure, whereas datasets that compress poorly contain greater variability.  

We quantify the \textit{Complexity} of a dataset $D$ using its \textit{Compression Ratio}:
\begin{equation}
    C(D) = \frac{S_{\text{comp}}(D)}{S_{\text{orig}}(D)},
\end{equation}
where $S_{\text{orig}}(D)$ is the size of the dataset in bytes (raw NumPy representation) and $S_{\text{comp}}(D)$ is the size after compression.  
All images are concatenated into a single PNG file prior to compression to maximize exploitation of spatial redundancies \citep{w3cpngencoders}.  
This compressed size then serves as a practical proxy for Kolmogorov complexity  \citep{grunwald2004shannon}.  

\paragraph{Choice of compressor.}  
While several compression algorithms could be employed, we adopt PNG \citep{w3cpngencoders} as our primary measure due to its widespread use and high optimization.  
For robustness, we also computed dataset complexity using \texttt{zip}, \texttt{bzip2}, \texttt{zstd}, and NumPy's \texttt{npz} (Table~\ref{tab:dataset_complexity_merged}).  
The relative ranking of datasets remained highly consistent across methods (Spearman $\rho \approx 0.80$–$0.95$), confirming that PNG compression serves as a reliable proxy for dataset complexity.

\section{Experiments}

We evaluate the detectability of AI-generated images along two primary axes: dataset complexity and image resolution.  
For the complexity experiments, all datasets are zero-padded or resized to a common resolution of $32 \times 32$ pixels.  
For the resolution experiments, we vary the input resolution using the OrganAMNIST dataset \citep{medmnistv2,medmnistv1} as a case study.  
To capture a broad spectrum of dataset complexity, we consider 19 datasets drawn from diverse distributions.  
The complete list of datasets is provided in Table~\ref{tab:dataset_size}.  
 
To ensure comparability across datasets, a consistent preprocessing and compression pipeline is applied to all datasets used for complexity evaluation.  
Diffusion models are trained for each dataset using a standardized configuration appropriate for $32 \times 32$ resolution.  
Apart from controlled experimental variations such as image size or dataset, architecture depth and channel width are kept constant across training runs.  
We employ a conditional U-Net backbone trained with the standard DDPM (Denoising Diffusion Probabilistic Model) formulation \citep{ho2020denoising}.  
The U-Net consists of 3 levels of encoder–decoder with symmetric skip connections and self-attention blocks at multiple resolutions to capture both local and global dependencies. 
The base channel width is 128 and doubles after each level.

Each dataset is split into training ($D_{\text{train}}^{\text{real}}$), validation ($D_{\text{val}}^{\text{real}}$), and test ($D_{\text{test}}^{\text{real}}$) subsets.  
When predefined validation and test splits with reasonable sizes are available, they are preserved.  
Otherwise, all available images are pooled and randomly partitioned.  
Validation and test sets each contain either $10{,}000$ images or one eighth of the total dataset size, whichever is smaller, with the remainder used for training.  
For datasets with limited sample counts, standard data augmentations such as horizontal and vertical flips are applied to increase effective sample size.

For each dataset, a diffusion model is trained to produce synthetic samples, resulting in $D_{\text{train}}^{\text{fake}}$, $D_{\text{val}}^{\text{fake}}$, and $D_{\text{test}}^{\text{fake}}$ splits that mirror the sizes of their real counterparts.  
Discriminators are trained using $D_{\text{train}}^{\text{real}} \cup D_{\text{train}}^{\text{fake}}$, validated on $D_{\text{val}}^{\text{real}} \cup D_{\text{val}}^\text{{fake}}$, and evaluated on $D_{\text{test}}^{\text{real}} \cup D_{\text{test}}^{\text{fake}}$.  
This setup enables a controlled comparison of detection performance across datasets of varying complexity and dimensionality.  

Each of the six discriminator variants (see \Cref{sec:methodology}) is trained and evaluated independently.  
For each variant, experiments are repeated five times with different random seeds, and the mean detection performance is reported.

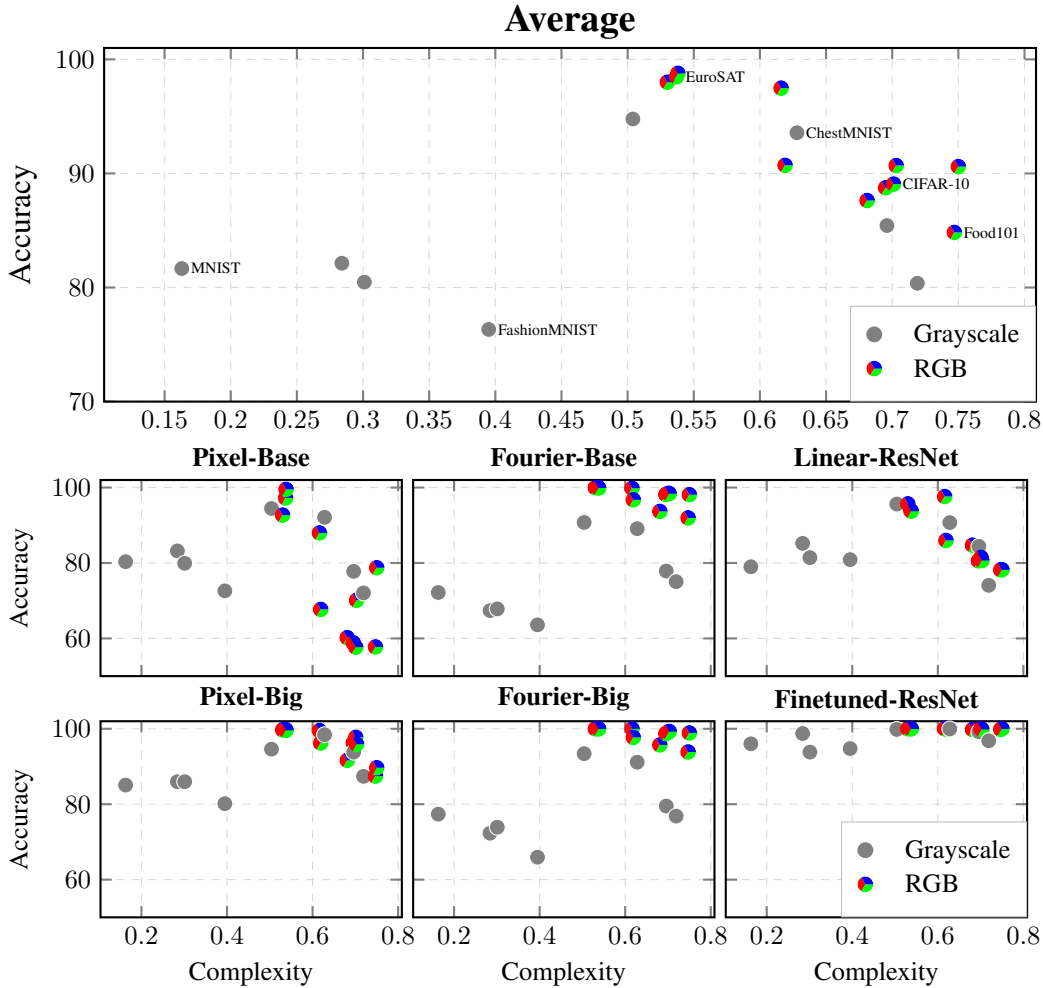
\begin{figure}[t]
    \centering
    \begin{tikzpicture}
    \pgfdeclareplotmark{RGB dot}{%
        \pgfpathcircle{\pgfpointorigin}{3pt}%
        \pgfusepath{clip}%
        \pgfsetfillcolor{red}%
        \pgfpathmoveto{\pgfpointorigin}%
        \pgfpathlineto{\pgfpointpolar{120}{3pt}}%
        \pgfpatharc{120}{240}{3pt}%
        \pgfpathclose%
        \pgfusepath{fill}%
        \pgfsetfillcolor{green}%
        \pgfpathmoveto{\pgfpointorigin}%
        \pgfpathlineto{\pgfpointpolar{240}{3pt}}%
        \pgfpatharc{240}{360}{3pt}%
        \pgfpathclose%
        \pgfusepath{fill}%
        \pgfsetfillcolor{blue}%
        \pgfpathmoveto{\pgfpointorigin}%
        \pgfpathlineto{\pgfpointpolar{0}{3pt}}%
        \pgfpatharc{0}{120}{3pt}%
        \pgfpathclose%
        \pgfusepath{fill}%
    }
    
    \begin{axis}[
        width=\linewidth,
        height=0.45\linewidth,
        ymin=70,
        ymax=101,
        grid=both,
        grid style={dashed, gray!30},
        scatter/classes={
            1={mark=*,mark size=3pt,fill=gray,draw=white}, 
            3={mark=RGB dot,mark size=3pt}      
        },
        axis line style={thick},
        tick style={thick},
        y dir=normal,
        title={}, 
        ylabel={Accuracy}, 
        ylabel style={anchor=east},
        yticklabel pos=left,
        axis y line*=left,
        axis lines=box,
        xlabel style={font=\large},
        ylabel style={font=\large},
    ]
    
        \addplot+[
            scatter, only marks, scatter src=explicit symbolic
        ] table [x=complexity, y=Average, meta=channels, col sep=comma]
          {figures/plots/accuracies/accuracy_summary.csv};
          
        \node[anchor=west,font=\tiny] at (axis cs:0.747,84.8 ) {Food101};
        \node[anchor=west,font=\tiny] at (axis cs:0.701,89.1 ) {CIFAR-10};
        \node[anchor=west,font=\tiny] at (axis cs:0.163,81.7 ) {MNIST};
        \node[anchor=west,font=\tiny] at (axis cs:0.395,76.3 ) {FashionMNIST};
        \node[anchor=west,font=\tiny] at (axis cs:0.628,93.6 ) {ChestMNIST};
        \node[anchor=west,font=\tiny] at (axis cs:0.537,98.5 ) {EuroSAT};

    \end{axis}

    \node[
        draw=gray!50,
        fill=white,
        inner sep=6pt,
        anchor=south east,
        at={(current axis.south east)}
    ] {
        \begin{tabular}{@{}cl@{}}
            \tikz{\fill[gray] (0,0) circle (3pt);} & Grayscale \\[2pt]
            \tikz{
                \pgfpathcircle{\pgfpointorigin}{3pt}%
                \pgfusepath{clip}%
                \pgfsetfillcolor{red}%
                \pgfpathmoveto{\pgfpointorigin}%
                \pgfpathlineto{\pgfpointpolar{120}{3pt}}%
                \pgfpatharc{120}{240}{3pt}%
                \pgfpathclose%
                \pgfusepath{fill}%
                \pgfsetfillcolor{green}%
                \pgfpathmoveto{\pgfpointorigin}%
                \pgfpathlineto{\pgfpointpolar{240}{3pt}}%
                \pgfpatharc{240}{360}{3pt}%
                \pgfpathclose%
                \pgfusepath{fill}%
                \pgfsetfillcolor{blue}%
                \pgfpathmoveto{\pgfpointorigin}%
                \pgfpathlineto{\pgfpointpolar{0}{3pt}}%
                \pgfpatharc{0}{120}{3pt}%
                \pgfpathclose%
                \pgfusepath{fill}%
            } & RGB \\
        \end{tabular}
    };
    \node[anchor=south, font=\Large\bfseries] at (current axis.north) {\textbf{Average}};
    \end{tikzpicture}
    \begin{tikzpicture}
    \pgfdeclareplotmark{RGB dot}{%
        \pgfpathcircle{\pgfpointorigin}{3pt}%
        \pgfusepath{clip}%
        \pgfsetfillcolor{red}%
        \pgfpathmoveto{\pgfpointorigin}%
        \pgfpathlineto{\pgfpointpolar{120}{3pt}}%
        \pgfpatharc{120}{240}{3pt}%
        \pgfpathclose%
        \pgfusepath{fill}%
        \pgfsetfillcolor{green}%
        \pgfpathmoveto{\pgfpointorigin}%
        \pgfpathlineto{\pgfpointpolar{240}{3pt}}%
        \pgfpatharc{240}{360}{3pt}%
        \pgfpathclose%
        \pgfusepath{fill}%
        \pgfsetfillcolor{blue}%
        \pgfpathmoveto{\pgfpointorigin}%
        \pgfpathlineto{\pgfpointpolar{0}{3pt}}%
        \pgfpatharc{0}{120}{3pt}%
        \pgfpathclose%
        \pgfusepath{fill}%
    }
    \pgfplotsset{every axis/.append style={
        width=0.4\linewidth,
        height=0.3\linewidth,
        ymin=50,
        ymax=102,
        grid=both,
        grid style={dashed, gray!30},
        scatter/classes={
            1={mark=*,mark size=3pt,fill=gray,draw=white}, 
            3={mark=RGB dot,mark size=3pt}      
        },
        axis line style={thick},
        tick style={thick},
        y dir=normal
    }}
    \begin{groupplot}[
        group style={
            group size=3 by 2,
            horizontal sep=0.15cm,
            vertical sep=0.6cm
        }
    ]
    \nextgroupplot[
        title={}, ylabel={Accuracy}, ylabel style={anchor=east},
        xlabel={},
        yticklabel pos=left,
        axis y line*=left,
        axis lines=box,
        xticklabels={}
    ]
        \addplot+[
            scatter, only marks, scatter src=explicit symbolic
        ] table [x=complexity, y=avg_accuracy, meta=channels, col sep=comma]
          {figures/plots/accuracies/Base_accuracy_stats.csv};
    
    \nextgroupplot[
        title={}, ylabel={},
        xlabel={},
        axis lines=box,
        yticklabels={},
        ymajorgrids=true,
        xticklabels={}
    ]
        \addplot+[
            scatter, only marks, scatter src=explicit symbolic
        ] table [x=complexity, y=avg_accuracy, meta=channels, col sep=comma]
          {figures/plots/accuracies/BaseFourier_accuracy_stats.csv};
    
    \nextgroupplot[
        title={}, ylabel={},
        xlabel={},
        axis lines=box,
        yticklabels={},
        ymajorgrids=true,
        xticklabels={}
    ]
        \addplot+[
            scatter, only marks, scatter src=explicit symbolic
        ] table [x=complexity, y=avg_accuracy, meta=channels, col sep=comma]
          {figures/plots/accuracies/ResNet_accuracy_stats.csv};
    
    \nextgroupplot[
        title={}, ylabel={Accuracy}, ylabel style={anchor=east},
        xlabel={Complexity},
        yticklabel pos=left,
        axis y line*=left,
        axis lines=box
    ]
        \addplot+[
            scatter, only marks, scatter src=explicit symbolic
        ] table [x=complexity, y=avg_accuracy, meta=channels, col sep=comma]
          {figures/plots/accuracies/Big_accuracy_stats.csv};
    
    \nextgroupplot[
        title={}, ylabel={},
        xlabel={Complexity},
        axis lines=box,
        yticklabels={},
        ymajorgrids=true
    ]
        \addplot+[
            scatter, only marks, scatter src=explicit symbolic
        ] table [x=complexity, y=avg_accuracy, meta=channels, col sep=comma]
          {figures/plots/accuracies/BigFourier_accuracy_stats.csv};
    
    \nextgroupplot[
        title={}, ylabel={},
        xlabel={Complexity},
        axis lines=box,
        yticklabels={},
        ymajorgrids=true
    ]
        \addplot+[
            scatter, only marks, scatter src=explicit symbolic
        ] table [x=complexity, y=avg_accuracy, meta=channels, col sep=comma]
          {figures/plots/accuracies/ResNetFine_accuracy_stats.csv};
    
    \end{groupplot}
    
    \node[
        draw=gray!50,
        fill=white,
        inner sep=6pt,
        anchor=south east,
        at={(group c3r2.south east)}
    ] {
        \begin{tabular}{@{}cl@{}}
            \tikz{\fill[gray] (0,0) circle (3pt);} & Grayscale \\[2pt]
            \tikz{
                \pgfpathcircle{\pgfpointorigin}{3pt}%
                \pgfusepath{clip}%
                \pgfsetfillcolor{red}%
                \pgfpathmoveto{\pgfpointorigin}%
                \pgfpathlineto{\pgfpointpolar{120}{3pt}}%
                \pgfpatharc{120}{240}{3pt}%
                \pgfpathclose%
                \pgfusepath{fill}%
                \pgfsetfillcolor{green}%
                \pgfpathmoveto{\pgfpointorigin}%
                \pgfpathlineto{\pgfpointpolar{240}{3pt}}%
                \pgfpatharc{240}{360}{3pt}%
                \pgfpathclose%
                \pgfusepath{fill}%
                \pgfsetfillcolor{blue}%
                \pgfpathmoveto{\pgfpointorigin}%
                \pgfpathlineto{\pgfpointpolar{0}{3pt}}%
                \pgfpatharc{0}{120}{3pt}%
                \pgfpathclose%
                \pgfusepath{fill}%
            } & RGB \\
        \end{tabular}
    };
    
    \node at ($(group c1r1.north) + (0,0.3cm)$) {\textbf{Pixel-Base}};
    \node at ($(group c2r1.north) + (0,0.3cm)$) {\textbf{Fourier-Base}};
    \node at ($(group c3r1.north) + (0,0.3cm)$) {\textbf{Linear-ResNet}};
    \node at ($(group c1r2.north) + (0,0.3cm)$) {\textbf{Pixel-Big}};
    \node at ($(group c2r2.north) + (0,0.3cm)$) {\textbf{Fourier-Big}};
    \node at ($(group c3r2.north) + (0,0.3cm)$) {\textbf{Finetuned-ResNet}};
    
    \end{tikzpicture}
    \caption{Discriminator accuracy across dataset complexity. \textbf{Top:} Overview across all datasets and models, gray points indicate grayscale images and tri-color points indicate RGB images. Medium-complexity datasets are easiest to detect, while simple datasets are nearly perfectly reproduced, and complex datasets mask generator errors. \textbf{Bottom:} Model-specific performance breakdown. Overall, increasing model capacity improves performance, particularly on high-complexity datasets. Fourier preprocessing boosts detection for RGB datasets, while fine-tuning ResNets achieves near-perfect accuracy across most datasets. Diminishing returns are observed when combining large models with Fourier preprocessing, and low-resolution grayscale datasets benefit less from these enhancements.}
    \label{fig:combined_accuracy}
\end{figure}

\section{Results}
\subsection{Results Across Dataset Complexity}

\Cref{fig:combined_accuracy} (top) summarizes the relationship between dataset complexity and discriminator accuracy.
Detailed results for each dataset are reported in \Cref{tab:accuracy_comparison}.
The figure shows average performance across all discriminator architectures.

At low complexity, the generator can capture the distribution almost perfectly, making very few mistakes, and the discriminator has a hard task.
At high complexity, the data distribution is wide, and the discriminator cannot reliably distinguish mistakes from genuine variability.
At medium complexity, the discriminator excels: the generator struggles to learn the distribution, producing systematic errors, while the dataset’s diversity is limited enough that these errors are clearly detectable.

\subsection{Discriminator Capacity}

Figure~\ref{fig:combined_accuracy} (bottom) summarizes discriminator performance across datasets of varying complexity.
Per-dataset results are given in \Cref{tab:accuracy_comparison}.
All models struggle on low-complexity datasets, where generators closely match the real distribution.
Accuracy improves on intermediate-complexity datasets, where imperfections are more visible.
For high-complexity datasets, smaller models decline, while larger ones retain accuracy by capturing subtler inconsistencies.

Increasing model capacity consistently improves performance.  
Transitioning from \texttt{Base} to \texttt{Big} CNNs boosts accuracy, particularly in high-complexity regimes.  
Fourier preprocessing stabilizes training and enhances detection for RGB datasets, as artifacts are spread in the frequency domain.  
However, combining Fourier transforms with larger CNNs yields only marginal additional improvement, suggesting diminishing returns when both capacity and preprocessing are maximized.  
For low-resolution grayscale datasets, Fourier preprocessing offers minimal benefit.  

\texttt{ResNet} discriminators follow similar trends.
Pretrained \texttt{ResNets} with linear projection perform well on intermediate datasets but struggle on the simplest and most complex ones.
Fine-tuning boosts accuracy across most datasets, with more errors on low-complexity and fewer on complex datasets, showing generators still make detectable mistakes.
These results show that model capacity is key for complex datasets.
As complexity rises, discriminators need more expressive architectures, but practical constraints in large-scale applications emphasize the need for efficient, high-capacity models.
In practical applications, such as content moderation at scale, computational constraints limit the extent to which discriminators can be enlarged, highlighting the need for efficient yet high-capacity models.

\subsection{Results Across Dataset Resolution}\label{sec: results across resolution}

\begin{figure}[t]
    \centering
    \begin{tikzpicture}
    \begin{axis}[
        width=0.75\textwidth,
        height=5cm,
        xlabel={Image Resolution},
        ylabel={Accuracy (\%)},
        ymin=65,
        ymax=101,
        xmin=28,
        xmax=140,
        xtick={32,64,128},
        xticklabels={32×32,64×64,128×128},
        xmode=log,
        log basis x={2},
        grid=major,
        grid style={dashed, gray!20},
        legend pos=south east,
        legend style={
            at={(1.02,0.5)},    
            anchor=west,        
            font=\small,
            fill=white,
            fill opacity=0.9,
            draw=gray!50,
            rounded corners=2pt,
            inner sep=4pt,
            legend columns=1    
        },
        axis line style={thick},
        tick style={thick},
        xlabel style={font=\large},
        ylabel style={font=\large},
    ]
    
    \addplot[
        color=green!80!black,
        mark=triangle*,
        mark size=3pt,
        line width=1pt,
        dashed,
        mark options={fill=green!80!black, solid}
    ] coordinates {
        (32,70.08)
        (64,69.53)
        (128,91.78)
    };
    \addlegendentry{Pixel-Base}
    
    \addplot[
        color=green!80!black,
        mark=triangle*,
        mark size=3pt,
        line width=1pt,
        mark options={fill=green!80!black}
    ] coordinates {
        (32,86.23)
        (64,89.08)
        (128,99.41)
    };
    \addlegendentry{Pixel-Big}
    
    \addplot[
        color=red!80!black,
        mark=diamond*,
        mark size=3pt,
        line width=1pt,
        dashed,
        mark options={fill=red!80!black, solid}
    ] coordinates {
        (32,74.68)
        (64,82.4)
        (128,97.1)
    };
    \addlegendentry{Fourier-Base}
    
    \addplot[
        color=red!80!black,
        mark=diamond*,
        mark size=3pt,
        line width=1pt,
        mark options={fill=red!80!black}
    ] coordinates {
        (32,76.54)
        (64,86.28)
        (128,98.27)
    };
    \addlegendentry{Fourier-Big}
    
    \addplot[
        color=blue!80!black,
        mark=square*,
        mark size=3pt,
        line width=1pt,
        dashed,
        mark options={fill=blue!80!black, solid}
    ] coordinates {
        (32,74.05)
        (64,80.55)
        (128,93.2)
    };
    \addlegendentry{Linear-ResNet}
    
    \addplot[
        color=blue!80!black,
        mark=square*,
        mark size=3pt,
        line width=1pt,
        mark options={fill=blue!80!black,}
    ] coordinates {
        (32,97.53)
        (64,97.15)
        (128,99.7)
    };
    \addlegendentry{Finetuned-ResNet}
    
    \node[
        pin={[pin edge={thick,red!70!black},pin distance=1.5cm]45:{Best: 99.7\%}},
        circle,
        fill=red!70!black,
        inner sep=1pt
    ] at (axis cs:128,99.7) {};
    
    \end{axis}
    \end{tikzpicture}
    \caption{Classification accuracy comparison across different model architectures on the OrganAMNIST dataset. Results show performance scaling with image resolution from 32×32 to 128×128 pixels. Fine-tuned ResNet consistently achieves the highest accuracy across all resolutions.}
    \label{fig:organamnist_accuracy}
\end{figure}
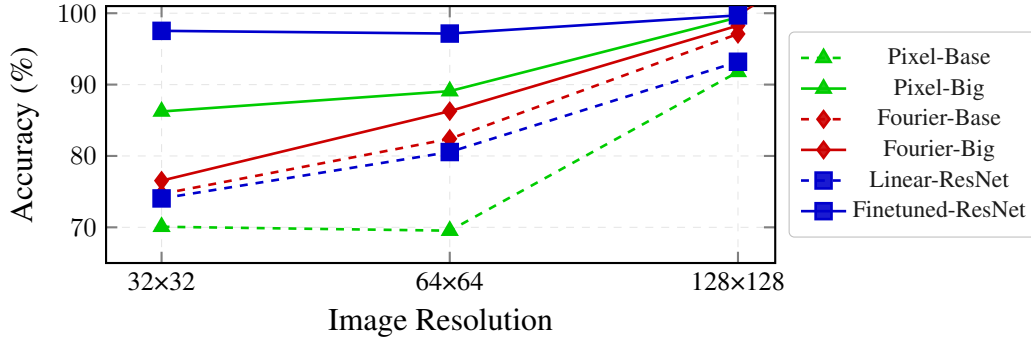


For the multi-resolution experiment, we used the OrganAMNIST dataset and evaluated diffusion-generated images at three resolutions: $32\times 32$, $64\times 64$ and $128\times 128$ pixels.  
At $32\times 32$, discriminator accuracy is comparatively low across most architectures, suggesting that low-resolution generations obscure many artifacts and are therefore harder to classify as fake.  
The fine-tuned ResNet, however, already achieves relatively strong performance at this resolution and continues to improve as resolution increases.  
At $64\times 64$, performance improves slightly across models, indicating that increased resolution exposes additional cues for detection.  
At $128\times 128$, all models achieve very high accuracy, showing that higher resolutions amplify detectable differences from real data, likely due to high-frequency artifacts introduced by the generator.  
These results are consistent with previous studies showing that higher resolution makes it easier to detect synthetic images \citep{zhu2023genimagemillionscalebenchmarkdetecting}.

This highlights a key dynamic in the generator-discriminator arms race: the generator's struggle to maintain high fidelity at larger scales, evidenced by the rising FID score (\Cref{fig:diffusion_images}), provides a clearer signal for detectors. The high-frequency artifacts that degrade the generator's performance appear to be the very same cues that enable discriminators to achieve near-perfect accuracy at high resolution.

\newcommand{\ImagePair}[3]{%
  \begin{tabular}{@{}c@{}} 
    {\scriptsize \textbf{#3}} \\[-0.1em]
    \includegraphics[width=\PairWidth]{#1} \\[-0.3em]
    \includegraphics[width=\PairWidth]{#2}
  \end{tabular}%
}

\newlength{\PairWidth}
\setlength{\PairWidth}{0.12\textwidth} 

\begin{figure}[t]
    \begin{center}
    \setlength{\tabcolsep}{2pt} 
    \renewcommand{\arraystretch}{1.0}
    \begin{tabular}{*{7}{c}} 
      \ImagePair{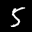}{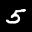}{MNIST} &
      \ImagePair{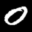}{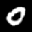}{EMNIST} &
      \ImagePair{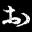}{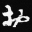}{KMNIST} &
      \ImagePair{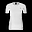}{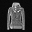}{FashionMNIST} &
      \ImagePair{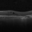}{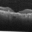}{OCTMNIST} &
      \ImagePair{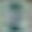}{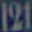}{SVHN} &
      \ImagePair{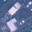}{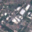}{EuroSAT} \\
      \ImagePair{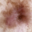}{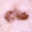}{DermaMNIST} &
      \ImagePair{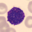}{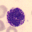}{BloodMNIST} &
      \ImagePair{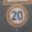}{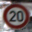}{GTSRB} &
      \ImagePair{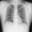}{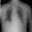}{ChestMNIST} &
      \ImagePair{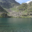}{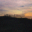}{Country211} &
      \ImagePair{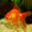}{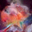}{CIFAR-100} &
      \ImagePair{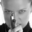}{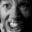}{FER-2013} \\
      \ImagePair{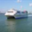}{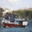}{CIFAR-10} &
      \ImagePair{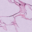}{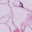}{PathMNIST} &
      \ImagePair{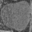}{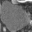}{OrganAMNIST} &
      \ImagePair{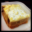}{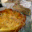}{Food101} &
      \ImagePair{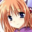}{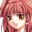}{Anime Face} &
        \multicolumn{2}{c}{%
        \begin{tabular}{c c}
            \toprule
            \textbf{Resolution} & \textbf{FID} $\downarrow$ \\
            \midrule
            $32 \times 32$ & 14.11 \\
            $64 \times 64$ & 24.08 \\
            $128 \times 128$ & 74.60 \\
            \bottomrule
        \end{tabular}
      } \\
    \end{tabular}
    
    \end{center}
    \caption{Real and generated images across datasets of varying complexity.  
    Each column shows a dataset, with real images on top and generated images on the bottom. 
    The bottom-right table reports FID scores across resolutions.  
    Generated images closely follow the real distributions, but FID increases with resolution, indicating that the diffusion model struggles to capture fine details.}

    \label{fig:diffusion_images}
\end{figure}

\subsection{Evaluation of the Diffusion Model}


Diffusion models are a leading approach for image generation due to their ability to capture complex data distributions and produce high-quality samples.
We evaluate the model qualitatively and quantitatively across datasets of varying complexity and resolution.

\textbf{Image quality across dataset complexity.}
\Cref{fig:diffusion_images} shows real images on top and generated images on the bottom for each dataset.
For simpler datasets such as FashionMNIST or KMNIST, generated samples are nearly indistinguishable from real images, reflecting low Real-AI FID values (see \Cref{tab:fid_results}).
In contrast, for more complex datasets like CIFAR-100, SVHN, or EuroSAT, subtle imperfections remain visible, corresponding to higher Real-AI FID scores.
These trends indicate that image quality decreases with increasing dataset complexity and help explain why discriminators perform better on more complex datasets, as subtle artifacts are easier to detect.

\textbf{Image quality across resolution.}
For OrganAMNIST, the model achieves strong fidelity at lower resolutions (see \Cref{fig:diffusion_images}, with performance gradually decreasing as resolution increases.
Overall, the diffusion model generates realistic and diverse samples for simpler datasets.
These results align with the discriminator performance shown in \Cref{sec: results across resolution} and \Cref{fig:organamnist_accuracy}; as resolution increases, discriminators become more effective at detecting fake samples.





\section{Limitations and Future Work}\label{sec: limitations}

Despite our study’s insights, several limitations suggest directions for future work.
The analysis is limited by the choice of generative model, including architecture and hyperparameters.
Diffusion model performance can vary with factors such as latent diffusion variants, number of diffusion steps, noise schedules, and other training settings, affecting image quality and artifact types.
Results may differ for other generative architectures or parameter configurations.
Extending the analysis to text-to-image models is also promising, as prompts introduce additional complexity that could influence discriminator performance.
Finally, complexity was measured at a fixed resolution of $32\times 32$ pixels, yet both dataset complexity and generative performance can scale with resolution, warranting further exploration.

Our study captures only a snapshot and does not consider the historical evolution of the generator-discriminator arms race. Studying past improvements could provide context for current detection challenges and reveal trends in model development. Also, human perception was not incorporated in the evaluation. Humans often serve as effective detectors of AI-generated content, so benchmarking against human performance could offer complementary insights.  

The measure of dataset complexity relies on standard PNG compression, which may not fully capture the intrinsic diversity of the data. 
For example, a dataset of random noise could appear highly complex under this metric, even though a learned compression model could efficiently encode it. Employing learned compression schemes tailored to each dataset could provide a more accurate assessment of structural complexity.  

Together, these limitations highlight both methodological constraints and opportunities for future research, including exploring higher-resolution images, alternative generative models, temporal dynamics, human-centered evaluations, and improved complexity metrics. Addressing these aspects would deepen our understanding of the conditions under which AI-generated content is most and least detectable.

\section{Conclusions}

Our study highlights the interplay between AI-generated image detectability, dataset complexity, data resolution, and discriminator capacity. Diffusion models are highly effective at learning simple datasets, producing images that closely match the real distribution. As dataset complexity increases, these models begin to make systematic errors, which discriminators can exploit to distinguish real from generated content. However, when datasets are extremely complex, even discriminators struggle to reliably detect fakes, as the diversity and variability in the data mask generator imperfections.  

Increasing data dimensionality, such as higher-resolution images, provides the discriminator with more features and subtle cues, improving detection accuracy. Larger discriminators further enhance performance, particularly in high-complexity regimes.
A key aspect we have only begun to explore is the synergistic effect of both complexity and resolution on detectability. As generators become more capable of producing high-resolution, complex images, the nature of the detectable artifacts may shift from global inconsistencies to subtle, high-frequency errors. This suggests that the "sweet spot" of intermediate complexity for detection may itself be resolution-dependent, a fascinating phenomenon that presents a rich and promising direction for further investigation to truly understand the boundaries of AI-generated content detection.

Looking forward, the rapid evolution of AI-generated content, including high-resolution images, text-to-image models, and multimodal media, presents both opportunities and challenges. Generators will continue to produce increasingly realistic content, while discriminators must adapt to maintain reliable detection. However, given the asymmetric nature of this arms race, it is likely that this battle will eventually be lost: as generative models approach perfect emulation of real data distributions, discriminators will be fundamentally limited in their ability to detect fakes. Understanding the limits of detection and the factors that influence it remains essential for building robust systems to safeguard digital media, mitigate misinformation, and preserve trust in online content. 
Our work provides a foundation for future research in this evolving landscape, guiding the development of both generative and discriminative AI in a responsible and informed manner.

\bibliographystyle{IEEEtran}
\bibliography{references}

\newpage

\section{Appendix}

\subsection{Extended Results}

\Cref{tab:accuracy_comparison} provides a comprehensive overview of classification accuracy across all datasets and discriminator architectures.  
The datasets span a wide range of intrinsic complexity, from simple digit datasets such as MNIST and KMNIST, to moderately complex datasets such as OCTMNIST and SVHN, to highly diverse datasets including CIFAR-10, PathMNIST, and Food101.  
The complexity column quantitatively reflects the structural richness and variability of each dataset, providing context for the observed performance trends.

\begin{table}[htbp]
\centering
\caption{Classification accuracy comparison across different models and datasets. 
The complexity column provides a quantitative measure of each dataset's structural richness and variability. 
Average accuracy across all models is also included for comparison. 
Fine-tuned ResNet consistently achieves the highest accuracy across datasets, demonstrating the importance of model capacity for handling complex and diverse image data.}
\label{tab:accuracy_comparison}
\resizebox{\textwidth}{!}{%
\begin{tabular}{lc|ccccccc}
\toprule
Dataset & Complexity & Average & Pixel-Base & Pixel-Big & Fourier-Base & Fourier-Big & Linear-ResNet & Finetuned-ResNet \\
\midrule
MNIST\citep{mnist} & 0.163 & \phantom{1}81.7 & \phantom{1}80.3 $\pm$ 2.7 & \phantom{1}\phantom{1}85.1 $\pm$ 4.9 & \phantom{1}72.2 $\pm$ 0.6 & \phantom{1}77.4 $\pm$ 0.6 & \phantom{1}79.0 $\pm$ 0.1 & \phantom{1}96.0 $\pm$ 3.9\\
EMNIST\citep{emnist} & 0.284 & \phantom{1}82.1 & \phantom{1}83.2 $\pm$ 1.0 & \phantom{1}86.0 $\pm$ 11.0 & \phantom{1}67.4 $\pm$ 0.8 & \phantom{1}72.3 $\pm$ 1.6 & \phantom{1}85.2 $\pm$ 0.1 & \phantom{1}98.7 $\pm$ 1.3\\
KMNIST\citep{torchvisiondatasets} & 0.301 & \phantom{1}80.5 & \phantom{1}79.9 $\pm$ 0.4 & \phantom{1}\phantom{1}86.0 $\pm$ 1.1& \phantom{1}67.8 $\pm$ 0.7 & \phantom{1}73.9 $\pm$ 1.4 & \phantom{1}81.4 $\pm$ 0.2 & \phantom{1}93.8 $\pm$ 3.8 \\
FashionMNIST\citep{fashionmnist} & 0.395 & \phantom{1}76.3 & \phantom{1}72.6 $\pm$ 1.2 & \phantom{1}\phantom{1}80.1 $\pm$ 3.1 & \phantom{1}63.6 $\pm$ 0.3 & \phantom{1}66.0 $\pm$ 0.5 & \phantom{1}80.9 $\pm$ 0.1 & \phantom{1}94.8 $\pm$ 4.4 \\
OCTMNIST\citep{medmnistv2,medmnistv1} & 0.504 & \phantom{1}94.8 & \phantom{1}94.5 $\pm$ 2.9 & \phantom{1}94.6 $\pm$ 10.4 & \phantom{1}90.8 $\pm$ 0.4 & \phantom{1}93.4 $\pm$ 0.4 & \phantom{1}95.6 $\pm$ 0.1 & \phantom{1}99.8 $\pm$ 0.0 \\
SVHN\citep{svhn} & 0.530 & \phantom{1}98.0 & \phantom{1}92.7 $\pm$ 7.1 & \phantom{1}\phantom{1}99.7 $\pm$ 0.1 & 100.0 $\pm$ 0.0 & 100.0 $\pm$ 0.0 & \phantom{1}95.7 $\pm$ 0.1 & 100.0 $\pm$ 0.0 \\
Eurosat-AUG\citep{eurosat} & 0.537 & \phantom{1}98.5 & \phantom{1}97.2 $\pm$ 4.5 & \phantom{1}\phantom{1}99.8 $\pm$ 0.2 & 100.0 $\pm$ 0.0 & 100.0 $\pm$ 0.0 & \phantom{1}93.9 $\pm$ 0.1 & 100.0 $\pm$ 0.0 \\
DermaMNIST-AUG\citep{medmnistv2,medmnistv1} & 0.538 & \phantom{1}98.8 & \phantom{1}99.5 $\pm$ 0.3 & \phantom{1}\phantom{1}99.6 $\pm$ 0.3 & \phantom{1}99.9 $\pm$ 0.0 & 100.0 $\pm$ 0.0 & \phantom{1}93.7 $\pm$ 0.2 & 100.0 $\pm$ 0.0 \\
BloodMNIST-AUG\citep{medmnistv2,medmnistv1} & 0.616 & \phantom{1}97.5 & \phantom{1}88.0 $\pm$ 5.7 & \phantom{1}\phantom{1}99.5 $\pm$ 0.7 & \phantom{1}99.8 $\pm$ 0.0 & 100.0 $\pm$ 0.0 & \phantom{1}97.6 $\pm$ 0.0 & 100.0 $\pm$ 0.0 \\
GTSRB\citep{stallkamp2011} & 0.619 & \phantom{1}90.7 & \phantom{1}67.7 $\pm$ 0.6 & \phantom{1}\phantom{1}96.2 $\pm$ 2.5 & \phantom{1}96.7 $\pm$ 0.2 & \phantom{1}97.7 $\pm$ 0.1 & \phantom{1}86.0 $\pm$ 0.2 & 100.0 $\pm$ 0.0 \\
ChestMNIST\citep{medmnistv2,medmnistv1} & 0.628 & \phantom{1}93.6 & \phantom{1}92.1 $\pm$ 5.6 & \phantom{1}\phantom{1}98.4 $\pm$ 0.6 & \phantom{1}89.1 $\pm$ 0.6 & \phantom{1}91.1 $\pm$ 0.3 & \phantom{1}90.7 $\pm$ 0.1 & \phantom{1}99.9 $\pm$ 0.0 \\
Country211\citep{torchvisiondatasets} & 0.681 & \phantom{1}87.6 & \phantom{1}60.2 $\pm$ 3.0 & \phantom{1}\phantom{1}91.6 $\pm$ 3.2 & \phantom{1}93.7 $\pm$ 0.3 & \phantom{1}95.7 $\pm$ 0.1 & \phantom{1}84.7 $\pm$ 0.2 & \phantom{1}99.8 $\pm$ 0.0 \\
CIFAR-100\citep{cifar10100} & 0.695 & \phantom{1}88.7 & \phantom{1}58.9 $\pm$ 1.1 & \phantom{1}\phantom{1}96.3 $\pm$ 1.3 & \phantom{1}98.2 $\pm$ 0.1 & \phantom{1}98.7 $\pm$ 0.0 & \phantom{1}80.6 $\pm$ 0.3 & \phantom{1}99.9 $\pm$ 0.0 \\
Fer-2013-AUG\citep{fer2013} & 0.696 & \phantom{1}85.4 & \phantom{1}77.8 $\pm$ 5.2 & \phantom{1}\phantom{1}93.8 $\pm$ 1.3 & \phantom{1}77.9 $\pm$ 0.2 & \phantom{1}79.5 $\pm$ 0.5 & \phantom{1}84.4 $\pm$ 0.3 & \phantom{1}99.2 $\pm$ 0.1 \\
CIFAR-10\citep{cifar10100} & 0.701 & \phantom{1}89.1 & \phantom{1}57.7 $\pm$ 1.5 & \phantom{1}\phantom{1}97.7 $\pm$ 0.6 & \phantom{1}98.5 $\pm$ 0.2 & \phantom{1}99.1 $\pm$ 0.1 & \phantom{1}81.5 $\pm$ 0.3 & \phantom{1}99.9 $\pm$ 0.0 \\
PathMNIST\citep{medmnistv2,medmnistv1} & 0.703 & \phantom{1}90.7 & \phantom{1}70.1 $\pm$ 0.6 & \phantom{1}\phantom{1}95.9 $\pm$ 2.3 & \phantom{1}98.4 $\pm$ 0.1 & \phantom{1}99.3 $\pm$ 0.1 & \phantom{1}80.6 $\pm$ 0.1 & \phantom{1}99.8 $\pm$ 0.1 \\
OrganAMNIST\citep{medmnistv2,medmnistv1} & 0.719 & \phantom{1}80.4 & \phantom{1}72.1 $\pm$ 1.5 & \phantom{1}\phantom{1}87.4 $\pm$ 4.2 & \phantom{1}75.0 $\pm$ 0.4 & \phantom{1}76.8 $\pm$ 0.3 & \phantom{1}74.1 $\pm$ 0.1 & \phantom{1}96.8 $\pm$ 1.1 \\
Food101\citep{food101} & 0.747 & \phantom{1}84.8 & \phantom{1}57.7 $\pm$ 3.0 & \phantom{1}\phantom{1}87.4 $\pm$ 4.6 & \phantom{1}92.0 $\pm$ 0.8 & \phantom{1}93.8 $\pm$ 0.4 & \phantom{1}78.2 $\pm$ 0.1 & \phantom{1}99.9 $\pm$ 0.0\\
Anime Face Dataset\citep{spencerchurchillbrianchao2019} & 0.750 & \phantom{1}90.6 & \phantom{1}78.7 $\pm$ 2.6 & \phantom{1}\phantom{1}89.7 $\pm$ 6.9 & \phantom{1}98.1 $\pm$ 0.2 & \phantom{1}98.9 $\pm$ 0.1 & \phantom{1}78.2 $\pm$ 0.1 & 100.0 $\pm$ 0.0 \\
\bottomrule
\end{tabular}
}
\end{table}

\subsection{Dataset Collection and Preparation}

The datasets (see Table \ref{tab:dataset_size}) used in this study are primarily obtained from \textit{Torchvision Datasets} \citep{torchvisiondatasets} and the \textit{MedMNIST+} collections \citep{medmnistv2, medmnistv1}.  
Table~\ref{tab:dataset_size} summarizes all datasets, including the number of training, validation, and test samples, as well as total sizes.  

To mitigate the limited number of samples in some datasets, data augmentation is applied.  
Augmented datasets are indicated by the suffix \textit{AUG}.  
EuroSAT and FER-2013 are augmented via horizontal flips.  
BloodMNIST and DermaMNIST are augmented using horizontal flips, vertical flips, and combined horizontal and vertical flips.  

To ensure consistent image dimensions, MNIST, EMNIST, KMNIST, and FashionMNIST ($28\times28$) are padded with black pixels to reach $32\times32$.  
All other datasets are either already at the target resolution or resized directly to $32\times32$.   

To study the effect of input dimensionality, we select OrganAMNIST because it is available at a high resolution of $128\times128$, which can be downscaled to $64\times64$ and $32\times32$ as needed.  

\begin{table}[ht]
\centering
\caption{Summary of datasets used in our study. Train, validation and test split are listed. Overview on the augmentation of smaller datasets is also provided.}
\begin{tabular}{lrrrr}
\toprule
\textbf{Dataset} & \textbf{Train} & \textbf{Val} & \textbf{Test} & \textbf{Total} \\
\midrule
MNIST\citep{mnist}  & 50,000 & 10,000 & 10,000 & 70,000 \\
EMNIST\citep{emnist}  & 102,800 & 10,000 & 10,000 & 122,800 \\
KMNIST\citep{torchvisiondatasets}  & 50,000 & 10,000 & 10,000 & 70,000 \\
FashionMNIST\citep{fashionmnist}  & 50,000 & 10,000 & 10,000 & 70,000 \\
OCTMNIST\citep{medmnistv2,medmnistv1}  & 89,309 & 10,000 & 10,000 & 109,309 \\
SVHN\citep{svhn}  & 79,289 & 10,000 & 10,000 & 99,289 \\
EuroSAT\citep{eurosat}  & 20,250 & 3,375 & 3,375 & 27,000 \\
EuroSAT-AUG  & 40,500 & 6,750 & 6,750 & 54,000 \\
DermaMNIST\citep{medmnistv2,medmnistv1}  & 7,513 & 1,251 & 1,251 & 10,015 \\
DermaMNIST-AUG  & 30,052 & 5,004 & 5,004 & 40,060 \\
BloodMNIST\citep{medmnistv2,medmnistv1}  & 12,820 & 2,136 & 2,136 & 17,092 \\
BloodMNIST-AUG  & 51,280 & 8,544 & 8,544 & 68,368 \\
GTSRB\citep{stallkamp2011}  & 38,881 & 6,479 & 6,479 & 51,839 \\
ChestMNIST\citep{medmnistv2,medmnistv1}  & 92,120 & 10,000 & 10,000 & 112,120 \\
Country211\citep{torchvisiondatasets}  & 47,476 & 7,912 & 7,912 & 63,300 \\
CIFAR-100\citep{cifar10100}  & 40,000 & 10,000 & 10,000 & 60,000 \\
FER-2013\citep{fer2013}  & 25,887 & 5,000 & 5,000 & 35,887 \\
FER-2013-AUG  & 51,774 & 10,000 & 10,000 & 71,774 \\
CIFAR-10\citep{cifar10100}  & 40,000 & 10,000 & 10,000 & 60,000 \\
PathMNIST\citep{medmnistv2,medmnistv1}  & 87,180 & 10,000 & 10,000 & 107,180 \\
OrganAMNIST\citep{medmnistv2,medmnistv1}  & 44,124 & 7,353 & 7,353 & 58,830 \\
Food-101\citep{food101}  & 81,000 & 10,000 & 10,000 & 101,000 \\
Anime Face Dataset\citep{spencerchurchillbrianchao2019} & 47,675 & 7,945 & 7,945 & 63,565 \\
\bottomrule
\end{tabular} \label{tab:dataset_size}
\end{table}

\subsubsection{Dataset Complexity}

To quantify the intrinsic complexity of the datasets, we approximate Kolmogorov complexity using compression-based measures.  
Each dataset is concatenated into a single PNG file, and the resulting compression ratio serves as a practical proxy for complexity.  
Datasets that compress efficiently exhibit more regularity and lower complexity, whereas datasets that compress poorly contain higher variability and are considered more complex.  

Table~\ref{tab:dataset_complexity_merged} reports the complexity values for all datasets, obtained both from PNG concatenation and alternative compression methods (\texttt{Zip}, \texttt{bzip2}, \texttt{Zstd}, \texttt{NumPy NPZ}).  
Lower values correspond to simpler datasets such as MNIST, while higher values correspond to more complex datasets such as Food-101 and Anime.  

\begin{table}[ht]
\centering
\caption{Complexity values of datasets obtained using PNG concatenation as well as alternative compression methods (PNG folder compression, Zip, bzip2, Zstd, and NumPy NPZ).  
The values serve as proxies for the intrinsic complexity of each dataset, with lower values indicating simpler, more regular datasets and higher values indicating more complex, diverse datasets.}
\begin{tabular}{l *{6}{c}}
\toprule
\textbf{Dataset} & \textbf{PNG concat.} & \textbf{PNG Folder} & \textbf{Zip of PNG} & \textbf{bzip2} & \textbf{Zstd} & \textbf{ NPZ} \\
\midrule
MNIST\citep{mnist} & 0.16 & 0.27 & 0.37 & 0.13 & 0.16 & 0.16 \\
EMNIST\citep{emnist} & 0.28 & 0.41 & 0.50 & 0.18 & 0.24 & 0.25 \\
KMNIST\citep{torchvisiondatasets} & 0.30 & 0.43 & 0.53 & 0.27 & 0.31 & 0.31 \\
FashionMNIST\citep{fashionmnist}  & 0.40 & 0.51 & 0.61 & 0.39 & 0.43 & 0.44 \\
OCTMNIST\citep{medmnistv2,medmnistv1}  & 0.50 & 0.62 & 0.72 & 0.50 & 0.68 & 0.66 \\
SVHN\citep{svhn}  & 0.53 & 0.56 & 0.59 & 0.79 & 0.95 & 0.88 \\
EuroSAT\citep{eurosat}  & 0.54 & 0.56 & 0.60 & 0.59 & 0.76 & 0.73 \\
DermaMNIST\citep{medmnistv2,medmnistv1}  & 0.54 & 0.57 & 0.60 & 0.77 & 0.94 & 0.87 \\
BloodMNIST\citep{medmnistv2,medmnistv1}  & 0.62 & 0.66 & 0.70 & 0.58 & 0.80 & 0.79 \\
GTSRB\citep{stallkamp2011} & 0.62 & 0.65 & 0.68 & 0.71 & 0.86 & 0.83 \\
ChestMNIST\citep{medmnistv2,medmnistv1}  & 0.63 & 0.74 & 0.84 & 0.75 & 0.98 & 0.94 \\
Country211\citep{torchvisiondatasets} & 0.68 & 0.71 & 0.74 & 0.85 & 0.97 & 0.90 \\
CIFAR-100\citep{cifar10100}  & 0.70 & 0.73 & 0.76 & 0.86 & 0.96 & 0.91 \\
FER-2013\citep{fer2013}  & 0.70 & 0.82 & 0.92 & 0.81 & 0.97 & 0.97 \\
CIFAR-10\citep{cifar10100}  & 0.70 & 0.74 & 0.77 & 0.86 & 0.97 & 0.92 \\
PathMNIST\citep{medmnistv2,medmnistv1}  & 0.70 & 0.73 & 0.76 & 0.63 & 0.84 & 0.81 \\
OrganAMNIST\citep{medmnistv2,medmnistv1}  & 0.72 & 0.84 & 0.94 & 0.76 & 0.88 & 0.89 \\
Food-101\citep{food101}  & 0.75 & 0.78 & 0.82 & 0.91 & 1.00 & 0.97 \\
Anime Face Dataset\citep{spencerchurchillbrianchao2019}  & 0.75 & 0.82 & 0.85 & 0.89 & 0.97 & 0.95 \\
\bottomrule
\end{tabular}
\label{tab:dataset_complexity_merged}
\end{table}

Random removal of samples from a dataset changes its size and compressed size but does not affect the underlying data distribution and maintains the same Compression Ratio.  
Formally, if $D = \{x_1, \dots, x_N\}$ and $D' \subset D$ is obtained by removing $k$ samples, the expected empirical distribution of $D'$ satisfies
\[
\mathbb{E}[\hat{p}_{D'}(x)] = \hat{p}_D(x),
\]
demonstrating that the underlying distribution remains preserved.  
For example, removing $10{,}000$ samples from MNIST ($N=50{,}000$) reduces the compressed size by $1/5$, while the data distribution remains approximately the same.

\subsection{Diffusion Model Training Procedure}

The training of our diffusion model follows the framework described in the theoretical background and incorporates several practical considerations to ensure consistency across experiments.

\textbf{Controlling Experimental Variables}:  
To isolate the effect of dataset complexity and image resolution, we control all other training variables to prevent confounding factors.

\textbf{Number of Iterations}:  
All models are trained for a fixed total of 5 million iterations.  
This number was chosen empirically to ensure smooth convergence across all datasets.

\textbf{Architecture}:  
We use a conditional U-Net with three levels of encoder–decoder blocks, symmetric skip connections, and self-attention layers at multiple resolutions to capture both local and global dependencies.  
The base channel width is 64, doubling at each successive level.  
Architecture depth and channel width are held constant across experiments to isolate dataset and resolution effects.

\textbf{Optimization Strategy}:  
Training uses the AdamW optimizer with a one-cycle learning rate scheduler and weight decay to facilitate stable convergence.  
A linear noise schedule is applied, and an exponential moving average (EMA) of the model weights is maintained, as EMA weights generally yield higher sample quality at inference.

\textbf{Model Selection}:  
After each epoch, the model is validated on $D_{\text{val}}^{\text{real}}$ using the same MSE objective employed during training.  
This validation allows for consistent monitoring of convergence and ensures comparability across runs.

By carefully controlling these factors, any observed differences in generative performance can be confidently attributed to variations in dataset complexity or image resolution, rather than inconsistencies in architecture, optimization, or training procedure.

\subsection{AI Image Generation}

During image generation, noise is iteratively transformed according to a conditional label.  
Generated datasets are sampled to match the size of the corresponding real subsets:
\[
|D_{\text{train}}^{\text{real}}| = |D_{\text{train}}^{\text{gen}}|, \quad
|D_{\text{val}}^{\text{real}}| = |D_{\text{val}}^{\text{gen}}|, \quad
|D_{\text{test}}^{\text{real}}| = |D_{\text{test}}^{\text{gen}}|.
\]

Figures~\ref{fig:fmnist_cfg} and~\ref{fig:cifar_cfg} illustrate the effect of varying the classifier-free guidance (CFG) parameter on image saturation for FashionMNIST and CIFAR-10.  
Increasing CFG values (0, 2, 10) produces visibly more saturated images, demonstrating how generation parameters can influence dataset characteristics.  

\begin{figure}[htbp]
    \centering
    \includegraphics[width=0.6\textwidth]{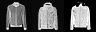}
    \caption{FashionMNIST images with increasing CFG values (0, 2, 10). Saturation increases with higher CFG.}
    \label{fig:fmnist_cfg}
\end{figure}

\begin{figure}[htbp]
    \centering
    \includegraphics[width=0.6\textwidth]{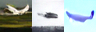}
    \caption{CIFAR-10 images with increasing CFG values (0, 2, 10). Saturation increases with higher CFG.}
    \label{fig:cifar_cfg}
\end{figure}

We do not apply techniques such as Classifier-Free Guidance (CFG) to improve sample quality.  
CFG introduces a weight parameter $w$ that requires careful tuning, which varies across datasets.  
Optimizing $w$ would introduce an uncontrolled variable that could influence discriminator performance.  
Prior work shows that CFG affects image saturation \citep{sadat2025eliminatingoversaturationartifactshigh}, potentially providing a trivial signal for discriminators.  

Other methods, such as Autoguidance \citep{karras2024guidingdiffusionmodelbad} or APG \citep{sadat2025eliminatingoversaturationartifactshigh}, mitigate this issue or improve FID.  
However, these methods also require dataset-specific optimization, which would similarly introduce non-controlled variables.  
By avoiding these techniques, we ensure that discriminator evaluation reflects intrinsic dataset and model characteristics rather than artifacts of sampling parameter tuning.

Table~\ref{tab:fid_results} reports the Fréchet Inception Distance (FID) between real and generated datasets, alongside the FID between training and validation subsets of the real data.
The Real-AI FID provides a quantitative measure of how closely the diffusion model replicates the distribution of real images, with lower values indicating higher fidelity. The Train-Val FID serves as a baseline and a lower bound on achievable FID, capturing the natural variability within the real dataset itself.
Across datasets, FID values vary significantly, reflecting differences in dataset complexity, image diversity, and the inherent difficulty of generation.
For simpler datasets like FashionMNIST or KMNIST, Real-AI FID is low and close to the Train-Val baseline, while complex datasets such as ChestMNIST or PathMNIST show substantially higher FID, indicating that the model struggles more to capture intricate visual patterns.
Interestingly, datasets with intermediate complexity such as SVHN and EuroSAT are the ones with highest FID scores.
These results highlight both the strengths and limitations of the diffusion model in reproducing diverse datasets and provide context for subsequent discriminator evaluations.

\begin{table}[ht]
\centering
\caption{Fréchet Inception Distance (FID) comparisons for real versus generated datasets (Real-AI FID) and between training and validation splits of real data (Train-Val FID) across multiple datasets. Lower FID indicates higher fidelity to the real distribution.}
\begin{tabular}{lcc}
\toprule
\textbf{Dataset} & \textbf{Real-AI FID} & \textbf{Train-Val FID} \\
\midrule
MNIST\cite{mnist} & 14.97 & 0.74 \\
KMNIST\cite{torchvisiondatasets} & 8.19  & 0.95 \\
FashionMNIST\cite{fashionmnist} & 6.27  & 1.47 \\
EMNIST\cite{emnist} & 11.01 & 0.69 \\
SVHN\cite{svhn} & 68.71 & 1.90 \\
EuroSAT\cite{eurosat} & 44.20 & 5.23 \\
BloodMNIST\cite{medmnistv2,medmnistv1} & 9.35  & 1.99 \\
GTSRB\cite{stallkamp2011} & 15.64 & 3.04 \\
Country211\cite{torchvisiondatasets} & 23.27 & 4.39 \\
CIFAR-100\cite{cifar10100} & 21.91 & 3.69 \\
CIFAR-10\cite{cifar10100} & 17.45 & 3.23 \\
FER-2013\cite{fer2013} & 15.30 & 3.32 \\
OrganAMNIST\cite{medmnistv2,medmnistv1} & 14.11 & 2.82 \\
Food101\cite{food101} & 19.41 & 2.82 \\
DermaMNIST\cite{medmnistv2,medmnistv1} & 19.99 & 6.22 \\
OCTMNIST\cite{medmnistv2,medmnistv1} & 10.26 & 1.04 \\
ChestMNIST\cite{medmnistv2,medmnistv1} & 15.45 & 1.16 \\
PathMNIST\cite{medmnistv2,medmnistv1} & 14.77 & 1.51 \\
Anime Face Dataset \cite{spencerchurchillbrianchao2019} & 13.64 & 2.36 \\
\bottomrule
\end{tabular}
\label{tab:fid_results}
\end{table}

\subsection{Discriminator Model Training}\label{appendix: discriminator training}

This section describes the training procedure for discriminators tasked with distinguishing real images from AI-generated ones.  \Cref{tab:discriminator_overview} gives an overview of the models used.

The input data consists of two sources: real images and generated images.  
To ensure a balanced dataset, the number of real samples is matched to the number of AI-generated samples.  

The split of $D^{\text{real}}$ corresponds to that used during diffusion model training, which only used $D_{\text{train}}^{\text{real}}$ and $D_{\text{val}}^{\text{real}}$.  

For discriminator training, the number of AI-generated images in $D_{\text{train}}^{\text{gen}}$ is capped at 50,000, yielding a maximum total of 100,000 images.  
The training, validation, and test sets for the discriminator contain real and generated images in equal proportion, forming $D^{\text{discr}} = D^{\text{real}} \cup D^{\text{gen}}$.  

Each image is assigned a binary label:
\[
y = 
\begin{cases}
1 & \text{if the image is real}, \\
0 & \text{if the image is AI-generated}.
\end{cases}
\]

We evaluate six discriminator variants, differing in architecture and input representation, while keeping hyperparameters consistent.  

All models are trained using the AdamW optimizer with 
\[
\alpha = 2 \times 10^{-4}, \quad \beta_1 = 0.5, \quad \beta_2 = 0.999.
\]

The loss function is Binary Cross-Entropy with logits.  
Training is performed for 1 million iterations.  

After each epoch, the discriminator is evaluated on the validation set $D_{\text{val}}^{\text{discr}}$.  
The model achieving the lowest validation loss is selected for final evaluation.  

\begin{table}[t]
\centering
\caption{Overview of discriminator variants, including input modality and the number of tunable parameters.}
\label{tab:discriminator_overview}
\begin{tabular}{ccc}
\toprule
\textbf{Model} & \textbf{Input Modality} & \textbf{\# Tunable Parameters} \\
\midrule
Base & Normalized & $\approx 40{,}000$ \\
Base & Fourier & $\approx 40{,}000$ \\
Big & Normalized & $\approx 520{,}000$ \\
Big & Fourier & $\approx 520{,}000$ \\
ResNet (Frozen) & Normalized & $\approx 500$ \\
ResNet (Fine-tuned) & Normalized & $\approx 11{,}000{,}000$ \\
\bottomrule
\end{tabular}
\end{table}

\section{Computational Resources}\label{appendix: compute used}
All experiments were conducted on an internal compute cluster equipped with RTX 3090 GPUs.
In total, we logged 2,261 GPU-hours for both exploratory experiments and the reported results.

\newpage
\section*{NeurIPS Paper Checklist}

\begin{enumerate}

\item {\bf Claims}
    \item[] Question: Do the main claims made in the abstract and introduction accurately reflect the paper's contributions and scope?
    \item[] Answer: \answerYes{} 
    \item[] Justification: 
    \item[] Guidelines:
    \begin{itemize}
        \item The answer NA means that the abstract and introduction do not include the claims made in the paper.
        \item The abstract and/or introduction should clearly state the claims made, including the contributions made in the paper and important assumptions and limitations. A No or NA answer to this question will not be perceived well by the reviewers. 
        \item The claims made should match theoretical and experimental results, and reflect how much the results can be expected to generalize to other settings. 
        \item It is fine to include aspirational goals as motivation as long as it is clear that these goals are not attained by the paper. 
    \end{itemize}

\item {\bf Limitations}
    \item[] Question: Does the paper discuss the limitations of the work performed by the authors?
    \item[] Answer: \answerYes{} 
    \item[] Justification: See \Cref{sec: limitations}.
    \item[] Guidelines:
    \begin{itemize}
        \item The answer NA means that the paper has no limitation while the answer No means that the paper has limitations, but those are not discussed in the paper. 
        \item The authors are encouraged to create a separate "Limitations" section in their paper.
        \item The paper should point out any strong assumptions and how robust the results are to violations of these assumptions (e.g., independence assumptions, noiseless settings, model well-specification, asymptotic approximations only holding locally). The authors should reflect on how these assumptions might be violated in practice and what the implications would be.
        \item The authors should reflect on the scope of the claims made, e.g., if the approach is only tested on a few datasets or with a few runs. In general, empirical results often depend on implicit assumptions, which should be articulated.
        \item The authors should reflect on the factors that influence the performance of the approach. For example, a facial recognition algorithm may perform poorly when image resolution is low or images are taken in low lighting. Or a speech-to-text system might not be used reliably to provide closed captions for online lectures because it fails to handle technical jargon.
        \item The authors should discuss the computational efficiency of the proposed algorithms and how they scale with dataset size.
        \item If applicable, the authors should discuss possible limitations of their approach to address problems of privacy and fairness.
        \item While the authors might fear that complete honesty about limitations might be used by reviewers as grounds for rejection, a worse outcome might be that reviewers discover limitations that aren't acknowledged in the paper. The authors should use their best judgment and recognize that individual actions in favor of transparency play an important role in developing norms that preserve the integrity of the community. Reviewers will be specifically instructed to not penalize honesty concerning limitations.
    \end{itemize}

\item {\bf Theory assumptions and proofs}
    \item[] Question: For each theoretical result, does the paper provide the full set of assumptions and a complete (and correct) proof?
    \item[] Answer: \answerYes{} 
    \item[] Justification: 
    \item[] Guidelines:
    \begin{itemize}
        \item The answer NA means that the paper does not include theoretical results. 
        \item All the theorems, formulas, and proofs in the paper should be numbered and cross-referenced.
        \item All assumptions should be clearly stated or referenced in the statement of any theorems.
        \item The proofs can either appear in the main paper or the supplemental material, but if they appear in the supplemental material, the authors are encouraged to provide a short proof sketch to provide intuition. 
        \item Inversely, any informal proof provided in the core of the paper should be complemented by formal proofs provided in appendix or supplemental material.
        \item Theorems and Lemmas that the proof relies upon should be properly referenced. 
    \end{itemize}

    \item {\bf Experimental result reproducibility}
    \item[] Question: Does the paper fully disclose all the information needed to reproduce the main experimental results of the paper to the extent that it affects the main claims and/or conclusions of the paper (regardless of whether the code and data are provided or not)?
    \item[] Answer: \answerYes{} 
    \item[] Justification: See for instance \Cref{appendix: discriminator training}.
    \item[] Guidelines:
    \begin{itemize}
        \item The answer NA means that the paper does not include experiments.
        \item If the paper includes experiments, a No answer to this question will not be perceived well by the reviewers: Making the paper reproducible is important, regardless of whether the code and data are provided or not.
        \item If the contribution is a dataset and/or model, the authors should describe the steps taken to make their results reproducible or verifiable. 
        \item Depending on the contribution, reproducibility can be accomplished in various ways. For example, if the contribution is a novel architecture, describing the architecture fully might suffice, or if the contribution is a specific model and empirical evaluation, it may be necessary to either make it possible for others to replicate the model with the same dataset, or provide access to the model. In general. releasing code and data is often one good way to accomplish this, but reproducibility can also be provided via detailed instructions for how to replicate the results, access to a hosted model (e.g., in the case of a large language model), releasing of a model checkpoint, or other means that are appropriate to the research performed.
        \item While NeurIPS does not require releasing code, the conference does require all submissions to provide some reasonable avenue for reproducibility, which may depend on the nature of the contribution. For example
        \begin{enumerate}
            \item If the contribution is primarily a new algorithm, the paper should make it clear how to reproduce that algorithm.
            \item If the contribution is primarily a new model architecture, the paper should describe the architecture clearly and fully.
            \item If the contribution is a new model (e.g., a large language model), then there should either be a way to access this model for reproducing the results or a way to reproduce the model (e.g., with an open-source dataset or instructions for how to construct the dataset).
            \item We recognize that reproducibility may be tricky in some cases, in which case authors are welcome to describe the particular way they provide for reproducibility. In the case of closed-source models, it may be that access to the model is limited in some way (e.g., to registered users), but it should be possible for other researchers to have some path to reproducing or verifying the results.
        \end{enumerate}
    \end{itemize}

\item {\bf Open access to data and code}
    \item[] Question: Does the paper provide open access to the data and code, with sufficient instructions to faithfully reproduce the main experimental results, as described in supplemental material?
    \item[] Answer: \answerYes{} 
    \item[] Justification: The code is provided in a zip file
    \item[] Guidelines:
    \begin{itemize}
        \item The answer NA means that paper does not include experiments requiring code.
        \item Please see the NeurIPS code and data submission guidelines (\url{https://nips.cc/public/guides/CodeSubmissionPolicy}) for more details.
        \item While we encourage the release of code and data, we understand that this might not be possible, so “No” is an acceptable answer. Papers cannot be rejected simply for not including code, unless this is central to the contribution (e.g., for a new open-source benchmark).
        \item The instructions should contain the exact command and environment needed to run to reproduce the results. See the NeurIPS code and data submission guidelines (\url{https://nips.cc/public/guides/CodeSubmissionPolicy}) for more details.
        \item The authors should provide instructions on data access and preparation, including how to access the raw data, preprocessed data, intermediate data, and generated data, etc.
        \item The authors should provide scripts to reproduce all experimental results for the new proposed method and baselines. If only a subset of experiments are reproducible, they should state which ones are omitted from the script and why.
        \item At submission time, to preserve anonymity, the authors should release anonymized versions (if applicable).
        \item Providing as much information as possible in supplemental material (appended to the paper) is recommended, but including URLs to data and code is permitted.
    \end{itemize}

\item {\bf Experimental setting/details}
    \item[] Question: Does the paper specify all the training and test details (e.g., data splits, hyperparameters, how they are chosen, type of optimizer, etc.) necessary to understand the results?
    \item[] Answer: \answerYes{} 
    \item[] Justification: 
    \item[] Guidelines:
    \begin{itemize}
        \item The answer NA means that the paper does not include experiments.
        \item The experimental setting should be presented in the core of the paper to a level of detail that is necessary to appreciate the results and make sense of them.
        \item The full details can be provided either with the code, in appendix, or as supplemental material.
    \end{itemize}

\item {\bf Experiment statistical significance}
    \item[] Question: Does the paper report error bars suitably and correctly defined or other appropriate information about the statistical significance of the experiments?
    \item[] Answer: \answerYes{} 
    \item[] Justification: See \Cref{tab:accuracy_comparison}.
    \item[] Guidelines:
    \begin{itemize}
        \item The answer NA means that the paper does not include experiments.
        \item The authors should answer "Yes" if the results are accompanied by error bars, confidence intervals, or statistical significance tests, at least for the experiments that support the main claims of the paper.
        \item The factors of variability that the error bars are capturing should be clearly stated (for example, train/test split, initialization, random drawing of some parameter, or overall run with given experimental conditions).
        \item The method for calculating the error bars should be explained (closed form formula, call to a library function, bootstrap, etc.)
        \item The assumptions made should be given (e.g., Normally distributed errors).
        \item It should be clear whether the error bar is the standard deviation or the standard error of the mean.
        \item It is OK to report 1-sigma error bars, but one should state it. The authors should preferably report a 2-sigma error bar than state that they have a 96\% CI, if the hypothesis of Normality of errors is not verified.
        \item For asymmetric distributions, the authors should be careful not to show in tables or figures symmetric error bars that would yield results that are out of range (e.g. negative error rates).
        \item If error bars are reported in tables or plots, The authors should explain in the text how they are calculated and reference the corresponding figures or tables in the text.
    \end{itemize}

\item {\bf Experiments compute resources}
    \item[] Question: For each experiment, does the paper provide sufficient information on the computer resources (type of compute workers, memory, time of execution) needed to reproduce the experiments?
    \item[] Answer: \answerYes{} 
    \item[] Justification: See overall numbers in \Cref{appendix: compute used}.
    \item[] Guidelines:
    \begin{itemize}
        \item The answer NA means that the paper does not include experiments.
        \item The paper should indicate the type of compute workers CPU or GPU, internal cluster, or cloud provider, including relevant memory and storage.
        \item The paper should provide the amount of compute required for each of the individual experimental runs as well as estimate the total compute. 
        \item The paper should disclose whether the full research project required more compute than the experiments reported in the paper (e.g., preliminary or failed experiments that didn't make it into the paper). 
    \end{itemize}
    
\item {\bf Code of ethics}
    \item[] Question: Does the research conducted in the paper conform, in every respect, with the NeurIPS Code of Ethics \url{https://neurips.cc/public/EthicsGuidelines}?
    \item[] Answer: \answerYes{} 
    \item[] Justification: 
    \item[] Guidelines:
    \begin{itemize}
        \item The answer NA means that the authors have not reviewed the NeurIPS Code of Ethics.
        \item If the authors answer No, they should explain the special circumstances that require a deviation from the Code of Ethics.
        \item The authors should make sure to preserve anonymity (e.g., if there is a special consideration due to laws or regulations in their jurisdiction).
    \end{itemize}

\item {\bf Broader impacts}
    \item[] Question: Does the paper discuss both potential positive societal impacts and negative societal impacts of the work performed?
    \item[] Answer: \answerNA{} 
    \item[] Justification: There are no direct impacts from this work. The model sizes are too small for general usage of deep fakes.
    \item[] Guidelines:
    \begin{itemize}
        \item The answer NA means that there is no societal impact of the work performed.
        \item If the authors answer NA or No, they should explain why their work has no societal impact or why the paper does not address societal impact.
        \item Examples of negative societal impacts include potential malicious or unintended uses (e.g., disinformation, generating fake profiles, surveillance), fairness considerations (e.g., deployment of technologies that could make decisions that unfairly impact specific groups), privacy considerations, and security considerations.
        \item The conference expects that many papers will be foundational research and not tied to particular applications, let alone deployments. However, if there is a direct path to any negative applications, the authors should point it out. For example, it is legitimate to point out that an improvement in the quality of generative models could be used to generate deepfakes for disinformation. On the other hand, it is not needed to point out that a generic algorithm for optimizing neural networks could enable people to train models that generate Deepfakes faster.
        \item The authors should consider possible harms that could arise when the technology is being used as intended and functioning correctly, harms that could arise when the technology is being used as intended but gives incorrect results, and harms following from (intentional or unintentional) misuse of the technology.
        \item If there are negative societal impacts, the authors could also discuss possible mitigation strategies (e.g., gated release of models, providing defenses in addition to attacks, mechanisms for monitoring misuse, mechanisms to monitor how a system learns from feedback over time, improving the efficiency and accessibility of ML).
    \end{itemize}
    
\item {\bf Safeguards}
    \item[] Question: Does the paper describe safeguards that have been put in place for responsible release of data or models that have a high risk for misuse (e.g., pretrained language models, image generators, or scraped datasets)?
    \item[] Answer: \answerNA{} 
    \item[] Justification: 
    \item[] Guidelines:
    \begin{itemize}
        \item The answer NA means that the paper poses no such risks.
        \item Released models that have a high risk for misuse or dual-use should be released with necessary safeguards to allow for controlled use of the model, for example by requiring that users adhere to usage guidelines or restrictions to access the model or implementing safety filters. 
        \item Datasets that have been scraped from the Internet could pose safety risks. The authors should describe how they avoided releasing unsafe images.
        \item We recognize that providing effective safeguards is challenging, and many papers do not require this, but we encourage authors to take this into account and make a best faith effort.
    \end{itemize}

\item {\bf Licenses for existing assets}
    \item[] Question: Are the creators or original owners of assets (e.g., code, data, models), used in the paper, properly credited and are the license and terms of use explicitly mentioned and properly respected?
    \item[] Answer: \answerYes{} 
    \item[] Justification: 
    \item[] Guidelines:
    \begin{itemize}
        \item The answer NA means that the paper does not use existing assets.
        \item The authors should cite the original paper that produced the code package or dataset.
        \item The authors should state which version of the asset is used and, if possible, include a URL.
        \item The name of the license (e.g., CC-BY 4.0) should be included for each asset.
        \item For scraped data from a particular source (e.g., website), the copyright and terms of service of that source should be provided.
        \item If assets are released, the license, copyright information, and terms of use in the package should be provided. For popular datasets, \url{paperswithcode.com/datasets} has curated licenses for some datasets. Their licensing guide can help determine the license of a dataset.
        \item For existing datasets that are re-packaged, both the original license and the license of the derived asset (if it has changed) should be provided.
        \item If this information is not available online, the authors are encouraged to reach out to the asset's creators.
    \end{itemize}

\item {\bf New assets}
    \item[] Question: Are new assets introduced in the paper well documented and is the documentation provided alongside the assets?
    \item[] Answer: \answerNA{} 
    \item[] Justification: 
    \item[] Guidelines:
    \begin{itemize}
        \item The answer NA means that the paper does not release new assets.
        \item Researchers should communicate the details of the dataset/code/model as part of their submissions via structured templates. This includes details about training, license, limitations, etc. 
        \item The paper should discuss whether and how consent is obtained from people whose asset is used.
        \item At submission time, remember to anonymize your assets (if applicable). You can either create an anonymized URL or include an anonymized zip file.
    \end{itemize}

\item {\bf Crowdsourcing and research with human subjects}
    \item[] Question: For crowdsourcing experiments and research with human subjects, does the paper include the full text of instructions given to participants and screenshots, if applicable, as well as details about compensation (if any)? 
    \item[] Answer: \answerNA{} 
    \item[] Justification: 
    \item[] Guidelines:
    \begin{itemize}
        \item The answer NA means that the paper does not involve crowdsourcing nor research with human subjects.
        \item Including this information in the supplemental material is fine, but if the main contribution of the paper involves human subjects, then as much detail as possible should be included in the main paper. 
        \item According to the NeurIPS Code of Ethics, workers involved in data collection, curation, or other labor should be paid at least the minimum wage in the country of the data collector. 
    \end{itemize}

\item {\bf Institutional review board (IRB) approvals or equivalent for research with human subjects}
    \item[] Question: Does the paper describe potential risks incurred by study participants, whether such risks are disclosed to the subjects, and whether Institutional Review Board (IRB) approvals (or an equivalent approval/review based on the requirements of your country or institution) are obtained?
    \item[] Answer: \answerNA{} 
    \item[] Justification: 
    \item[] Guidelines:
    \begin{itemize}
        \item The answer NA means that the paper does not involve crowdsourcing nor research with human subjects.
        \item Depending on the country in which research is conducted, IRB approval (or equivalent) may be required for any human subjects research. If you obtained IRB approval, you should clearly state this in the paper. 
        \item We recognize that the procedures for this may vary significantly between institutions and locations, and we expect authors to adhere to the NeurIPS Code of Ethics and the guidelines for their institution. 
        \item For initial submissions, do not include any information that would break anonymity (if applicable), such as the institution conducting the review.
    \end{itemize}

\item {\bf Declaration of LLM usage}
    \item[] Question: Does the paper describe the usage of LLMs if it is an important, original, or non-standard component of the core methods in this research? Note that if the LLM is used only for writing, editing, or formatting purposes and does not impact the core methodology, scientific rigorousness, or originality of the research, declaration is not required.
    \item[] Answer: \answerNA{} 
    \item[] Justification: 
    \item[] Guidelines:
    \begin{itemize}
        \item The answer NA means that the core method development in this research does not involve LLMs as any important, original, or non-standard components.
        \item Please refer to our LLM policy (\url{https://neurips.cc/Conferences/2025/LLM}) for what should or should not be described.
    \end{itemize}

\end{enumerate}

\end{document}